\documentclass[sigconf]{acmart}

\usepackage{amsmath,amssymb,amsfonts}
\usepackage{bm}
\usepackage{algorithm}
\usepackage{algpseudocode}
\usepackage{booktabs}
\usepackage{balance}

\newcommand{\E}{\mathbb{E}}

\newcommand{\given}{\,|\,}

\newtheorem*{theorem*}{Theorem}
\AtBeginDocument{%
  }

\setcopyright{cc}
\setcctype{by}
\copyrightyear{2026}
\acmYear{2026}
\acmDOI{10.1145/3770855.3817847}
\acmConference[KDD 2026]{Proceedings of the 32nd ACM SIGKDD Conference on Knowledge Discovery and Data Mining V.2}{August 9--13, 2026}{Jeju Island, Republic of Korea.}
\acmBooktitle{Proceedings of the 32nd ACM SIGKDD Conference on Knowledge Discovery and Data Mining V.2 (KDD 2026), August 9--13, 2026, Jeju Island, Republic of Korea}
\acmISBN{979-8-4007-2259-2/2026/08}
\begin{document}

\title{Constrained Auto-Bidding via Generative Response Modeling}

\author{Eunseok Yang}
\email{eunseok.y@navercorp.com}
\affiliation{%
  \institution{NAVER Corporation}
  \city{Seongnam-si}
  \country{Republic of Korea}
}

\author{Xingdong Zuo}
\email{xingdong.zuo@navercorp.com}
\affiliation{%
  \institution{NAVER Corporation}
  \city{Seongnam-si}
  \country{Republic of Korea}
}

\author{Kyung-Min Kim}
\email{kyungmin.kim.ml@navercorp.com}
\affiliation{%
  \institution{NAVER Corporation}
  \city{Seongnam-si}
  \country{Republic of Korea}
}

\renewcommand{\shortauthors}{Yang et al.}

\begin{abstract}
Auto-bidding systems aim to maximize advertiser value over long horizons 
under budget constraints and ratio targets such as cost-per-acquisition, 
yet future traffic and auction dynamics are non-stationary and uncertain. 
Existing approaches face distinct limitations:
control-based pacing reacts to deviations but cannot anticipate future conditions,
while RL and generative methods fold constraints into reward signals,
obscuring violations and degrading under distribution shift.
We shift the learning target from \emph{actions} to \emph{responses} 
with the \emph{Generative Response Model} (GRM), a history-conditioned 
sequence model that jointly predicts future traffic volume and 
horizon-aggregate cost/value curves as functions of a single bid multiplier. 
We show that under mild monotonicity conditions, the optimality gap 
relative to full per-tick control is bounded by the dispersion of 
per-tick marginal value-per-cost. 
Given predicted responses, a lightweight analytic controller enforces 
each active constraint via a 1D root-finding step. 
We prove this controller is exact for the single-multiplier problem 
and bound constraint violations under receding-horizon replanning 
in terms of prediction error. 
Experiments on AuctionNet show that GRM improves constraint stability 
and overall score compared to existing baselines.
\end{abstract}

\begin{CCSXML}
<ccs2012>
  <concept>
    <concept_id>10002951.10003227.10003447</concept_id>
    <concept_desc>Information systems~Computational advertising</concept_desc>
    <concept_significance>500</concept_significance>
  </concept>
  <concept>
    <concept_id>10010147.10010257.10010258.10010259</concept_id>
    <concept_desc>Computing methodologies~Supervised learning</concept_desc>
    <concept_significance>300</concept_significance>
  </concept>
  <concept>
    <concept_id>10010147.10010257.10010258.10010261.10010272</concept_id>
    <concept_desc>Computing methodologies~Sequential decision making</concept_desc>
    <concept_significance>300</concept_significance>
  </concept>
</ccs2012>
\end{CCSXML}

\ccsdesc[500]{Information systems~Computational advertising}
\ccsdesc[300]{Computing methodologies~Supervised learning}
\ccsdesc[300]{Computing methodologies~Sequential decision making}

\keywords{auto-bidding, computational advertising, budget pacing, constrained optimization, sequence modeling, response prediction}

\maketitle

\section{Introduction}
Online advertising platforms generate billions of auction opportunities daily, where advertisers compete in real time for impressions \cite{edelman2007internet,yuan2013real}.
Auto-bidding systems maximize advertiser value (e.g., conversions or revenue) under long-horizon constraints: a fixed campaign budget together with an efficiency target such as cost-per-acquisition (CPA) or return-on-ad-spend (ROAS).
Constraints span the full horizon, while bidding decisions are made impression by impression under uncertain future traffic and competition.

Production systems commonly decompose this into two layers \cite{he2021unified,aggarwal2024auto}: a per-impression value estimator, and a campaign-level pacing multiplier $\alpha$ that scales values into bids.
The control problem is then to update $\alpha$ over time so that cumulative spend and KPI ratios stay feasible at the horizon, despite uncertain remaining supply and conversion efficiency.

Prior approaches fall into two families.
The first outputs bids or pacing multipliers directly.
Control-based pacing updates $\alpha$ from feedback on realized spend and efficiency \cite{zhang2016feedback,he2021unified}, yielding a stable controller that cannot anticipate future conditions.
Primal-dual and online-learning methods give no-regret bidding under budget and return-on-spend constraints with formal guarantees \cite{yu2017online,zhang2022leveraging}, but rely on stylized stochastic assumptions and do not learn history-conditioned responses.
Reinforcement learning approaches learn bidding policies from offline data via value estimation or policy regularization \cite{wu2018budget,cai2017real}, but typically encode constraints through reward shaping, which is brittle under distribution shift.
Generative sequence models such as Decision Transformer \cite{chen2021decision} capture long histories by generating actions conditioned on return-to-go (RTG), with constraints folded into the RTG signal or enforced by post-hoc search. Feasibility is mediated through the conditioning value, so the model exposes no direct mechanism to bound or diagnose violations. The RTG itself must be specified at inference time, which adds uncertainty over long horizons.

The second family predicts environment quantities for a separate controller to consume.
Bid-landscape models predict per-auction win-rate or price distributions as functions of the bid \cite{cui2011bid,ghosh2019scalable,wang2016functional}, but treat each auction as a snapshot and need separate layers to enforce horizon-level constraints.
Forecast-based controllers such as MPC-style pacing plan with predicted spend-rate dynamics, but address cost or supply alone.
Neither family directly predicts how cost and value respond to the bid multiplier, so constraints are either folded into the objective or enforced by a separate layer.

We propose a different decomposition: learn the environment's response rather than the optimal action.
Our \textbf{Generative Response Model (GRM)} is a history-conditioned model that predicts a horizon response bundle: the horizon-aggregate expected cost curve $\bar{C}(\alpha)$, the horizon-aggregate expected value curve $\bar{V}(\alpha)$, and the remaining traffic over the horizon.
Given this predicted response, a lightweight analytic controller computes the budget-feasible multiplier $\alpha_B$ and the efficiency-feasible multiplier $\alpha_C$ via two one-dimensional root solves, and executes
\[
\alpha_t = \min\{\alpha_B,\alpha_C\}.
\]
Each plan treats $\alpha$ as a single value for the remainder of the horizon, but the controller re-solves at every tick on updated state, so the executed sequence $\alpha_1,\ldots,\alpha_T$ varies tick to tick.
Because the controller solves a feasibility problem on predicted curves, constraint handling is explicit and violations are tied to specific prediction errors.
Under receding-horizon replanning, constraint violations scale linearly with prediction error (Theorem~\ref{thm:violation}), and better prediction directly improves constraint safety.
Response curves are monotone and smooth in $\alpha$, whereas optimal actions change discontinuously at constraint boundaries. This makes the curves easier targets for supervised learning.

We further simplify by predicting one horizon-aggregate curve rather than a separate curve per tick, which incurs a gap controlled by efficiency dispersion, the traffic-weighted variance of per-tick marginal value-per-cost (Theorem~\ref{thm:gap}). When dispersion is small, this approximation is near-optimal and gives a lower-dimensional, more stable prediction target.

\smallskip\noindent\textbf{Contributions.}
We introduce the \emph{Generative Response Model} (GRM, Figure~\ref{fig:overview}), a history-conditioned model that predicts horizon-aggregate response curves (cost, value, traffic) rather than actions.
An analytic min-pacing controller then computes constraint-feasible multipliers via two 1D root solves and recovers the exact single-$\alpha$ solution (Theorem~\ref{thm:exact}).
We prove that the single-$\alpha$ gap is bounded by efficiency dispersion (Theorem~\ref{thm:gap}) and that constraint violations scale with prediction error (Theorem~\ref{thm:violation}).
On AuctionNet \cite{su2024a}, GRM outperforms the strongest baseline by $7.8\%$ and degrades less under two distribution-shift scenarios.

\begin{figure*}[t]
  \centering
  \includegraphics[width=\linewidth]{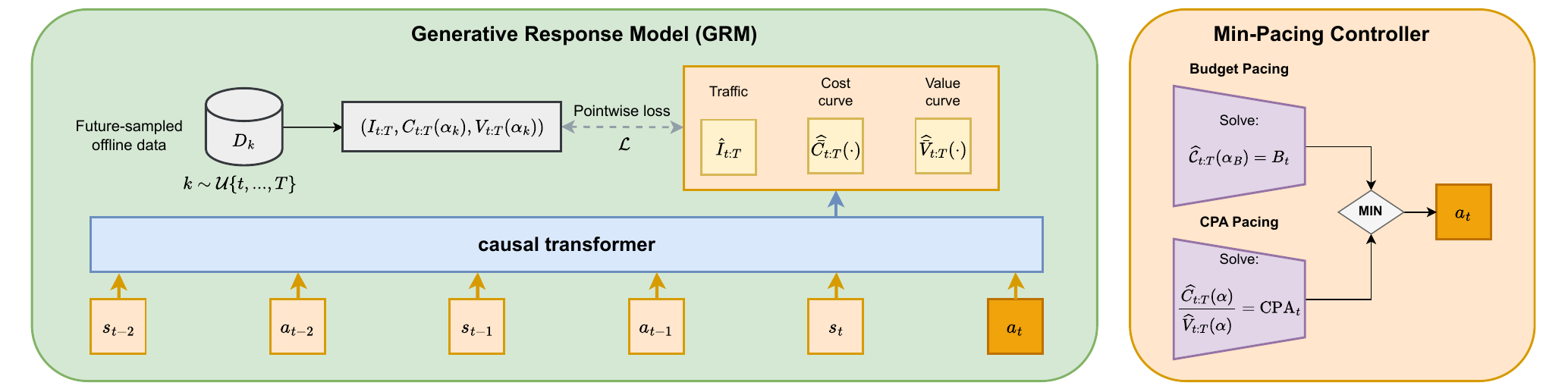}
  \caption{Overview of the Generative Response Model (GRM) framework.
    (Left) GRM encodes state-action history via a causal transformer and predicts
    horizon-aggregate response curves (traffic $\hat{I}_{t:T}$, cost $\widehat{\bar{C}}_{t:T}(\cdot)$, value $\widehat{\bar{V}}_{t:T}(\cdot)$).
    Training uses future-sampled supervision, where for each anchor tick $t$ we sample future ticks $k\sim\mathcal{U}\{t,\dots,T\}$
    from offline data and fit curves to realized outcomes.
    (Right) The min-pacing controller solves two 1D root equations for budget and CPA constraints,
    then executes $\alpha_t = \min\{\alpha_B, \alpha_C\}$.}
  \label{fig:overview}
\end{figure*}

\section{Problem Setup}\label{sec:problem-setup}

\paragraph{General bid optimization.}
In real-time bidding, an advertiser faces a stream of impression opportunities over a campaign horizon.
At each opportunity $i$, the advertiser observes features $x_i$ (user attributes, ad slot, context)
and must submit a bid $b_i \ge 0$ before the auction clears.
Let $v_i$ denote the estimated value of winning impression $i$ (e.g., predicted conversion probability
or expected revenue), $c_i(b_i)$ the cost incurred, and $u_i(b_i)$ the realized value.
The general constrained bid optimization problem is:
\begin{align}
\max_{b_1,\dots,b_N}~ & \sum_{i=1}^{N} u_i(b_i) \label{eq:general-obj}\\
\text{s.t.}\quad
& \sum_{i=1}^{N} c_i(b_i) \le B, \label{eq:general-budget}\\
& \frac{\sum_{i=1}^{N} c_i(b_i)}{\sum_{i=1}^{N} u_i(b_i)} \le \tau, \label{eq:general-cpa}
\end{align}
where $B$ is the total budget and $\tau$ is the target CPA or inverse target ROAS.
Three factors make this formulation intractable:
(i) the number of opportunities $N$ can exceed millions per day;
(ii) decisions are sequential under uncertainty about future supply and competition; and
(iii) constraints \eqref{eq:general-budget}--\eqref{eq:general-cpa} couple all decisions across the horizon.

\paragraph{Multiplier-based pacing.}
Production auto-bidding systems address this complexity through a two-layer decomposition.
A per-impression value estimator produces $v_i$, and a campaign-level \emph{pacing multiplier} $\alpha$ scales values into bids via $b_i = \alpha\, v_i$.
This reduces the problem from choosing $N$ individual bids to choosing a single multiplier $\alpha$, while the value estimator is trained separately and held fixed during bid optimization.
Multiplicative pacing of this form preserves the relative ordering of impression values and gives a single control parameter for budget and efficiency \cite{conitzer2022pacing,balseiro2023field}.

\paragraph{Tick-level formulation.}
We further aggregate time into discrete \emph{ticks} $t=1,\dots,T$ (e.g., minutes or hours),
where each tick contains $I_t$ impression opportunities.
At tick $t$, the bidder chooses a multiplier $\alpha_t \in \mathcal{A}$
and applies it uniformly to all impressions within that tick,
where $\mathcal{A}=[\underline\alpha,\bar\alpha]\subseteq \mathbb{R}_{+}$ is a bounded operating range:
\begin{align}
b_{t,i}(\alpha_t) \;=\; \alpha_t\, v_{t,i}, \qquad i=1,\dots,I_t.
\end{align}
Tick-level aggregation matches the coarse time scale of campaign-level constraints and keeps the state space tractable for sequential decision-making, while preserving the essential budget-pacing dynamics \cite{zhang2016feedback,he2021unified}.

\paragraph{Information structure within a tick.}
At the beginning of tick $t$, the bidder observes pre-decision history
\begin{align}
H_t := (s_{1:t},\, \alpha_{1:t-1},\, I_{1:t-1},\, \mathrm{Cost}_{<t},\, \mathrm{Val}_{<t}),
\end{align}
where $s_k$ denotes contextual features (time-of-day, campaign state, etc.).
The bidder then commits to a multiplier $\alpha_t$ before observing the bid opportunities for tick $t$.
Subsequently, $I_t$ opportunities arrive stochastically, auctions are resolved, and outcomes (cost, value) are realized.
At decision time $t$, the future traffic volumes $I_{t:T} := \sum_{k=t}^{T} I_k$ and
response curves are unknown and must be predicted from $H_t$.

\paragraph{Spend and value curves as functions of $\alpha$.}
Let $c_{t,i}(\alpha)$ and $u_{t,i}(\alpha)$ denote the cost and realized value obtained at opportunity $i$ in tick $t$ when bidding with multiplier $\alpha$.
We define per-opportunity expected curves conditioned on the pre-decision history:
\begin{align}
C_t(\alpha) &:= \mathbb{E}\!\left[c_{t,i}(\alpha) \given H_t\right], \\
V_t(\alpha) &:= \mathbb{E}\!\left[u_{t,i}(\alpha) \given H_t\right],
\end{align}
where the expectation is over auction outcomes (competition, winning, conversion) given history $H_t$.
Here $v_{t,i}$ is the per-impression estimated value, used as the bid scaler in $b_{t,i} = \alpha\, v_{t,i}$, while $u_{t,i}$ is the realized KPI outcome (conversion indicator under a CPA target, revenue under a ROAS target).
Both target-CPA (cost/conversions $\le \tau$) and target-ROAS (revenue/cost $\ge \rho$) cases reduce to the same ratio form $\sum C_t / \sum V_t \le \tau$ \cite{aggarwal2024auto,balseiro2023field}. We phrase the remainder of the paper in CPA terms for concreteness, and the same arguments apply to ROAS after reinterpreting $u_{t,i}$ as revenue.

\paragraph{Structural assumptions on response curves.}
We impose mild regularity conditions that are standard in pacing and auction theory:
\begin{itemize}
\item[(S1)] $C_t(\alpha)$ is strictly increasing in $\alpha$, and $V_t(\alpha)$ is non-decreasing;
\item[(S2)] Both curves are bounded: $C_t(\alpha), V_t(\alpha) \in [0, \bar{M}]$ for some $\bar{M} < \infty$.
\end{itemize}
Assumption (S1) reflects that higher multipliers yield more aggressive bidding, increasing both spend and (weakly) value.
Assumption (S2) holds because per-opportunity cost is bounded by the maximum bid and value is bounded by the outcome space.
Under these properties, the constraint equations admit unique root solutions (\S\ref{sec:control}).

\paragraph{Constrained objective.}
The tick-level reformulation of \eqref{eq:general-obj}--\eqref{eq:general-cpa} is:
\begin{align}
\max_{\alpha_{1:T}}~ & \sum_{t=1}^{T} I_t\, V_t(\alpha_t) \label{eq:primal}\\
\text{s.t.}\quad
& \sum_{t=1}^{T} I_t\, C_t(\alpha_t) \le B, \label{eq:budget}\\
& \frac{\sum_{t=1}^{T} I_t\, C_t(\alpha_t)}{\sum_{t=1}^{T} I_t\, V_t(\alpha_t)} \le \tau.
\label{eq:cpa}
\end{align}
Equivalently, \eqref{eq:cpa} becomes $\sum_{t} I_t\, C_t(\alpha_t) \le \tau \sum_{t} I_t\, V_t(\alpha_t)$.
Additional ratio constraints such as ROI floors \cite{su2024spending} fit the same form via extra Lagrange multipliers \cite{aggarwal2024auto,balseiro2023field}.

\section{Generative Response Model (GRM)}
\subsection{Response objects and horizon-aggregate curves}
\paragraph{Horizon-aggregate response curves.}
Recall from \S\ref{sec:problem-setup} that $C_k(\alpha) = \E[c_{k,i}(\alpha) \given H_k]$ and $V_k(\alpha) = \E[u_{k,i}(\alpha) \given H_k]$
are history-conditioned per-opportunity expected cost and value at tick $k$.
Each plan of our controller uses a single multiplier $\alpha$ over the remaining horizon. We therefore define the \emph{horizon-aggregate per-opportunity curves}:
\begin{align}
\bar{C}_{t:T}(\alpha) &:= \frac{\sum_{k=t}^{T} I_k\, C_k(\alpha)}{\sum_{k=t}^{T} I_k}, \\
\bar{V}_{t:T}(\alpha) &:= \frac{\sum_{k=t}^{T} I_k\, V_k(\alpha)}{\sum_{k=t}^{T} I_k}.
\end{align}
These are the traffic-weighted averages of per-opportunity cost and value over the horizon $t{:}T$ under constant $\alpha$.
Because $C_k$ and $V_k$ are conditioned on $H_k$, the aggregate curves $\bar{C}_{t:T}$ and $\bar{V}_{t:T}$
inherit this history-dependence, and GRM learns to predict them from the pre-decision history $H_t$.

\paragraph{Response bundle.}
At decision time $t$, horizon planning requires forecasting future traffic and the aggregate response curves:
\begin{align}
\mathcal{R}_{t:T}
~:=~
\Big(\, I_{t:T},~ \bar{C}_{t:T}(\cdot),~ \bar{V}_{t:T}(\cdot)\Big),
\end{align}
where $I_{t:T}=\sum_{k=t}^{T} I_k$ is the total number of future bid opportunities.
The expected horizon-level totals under constant $\alpha$ are then
$\mathcal{C}_{t:T}(\alpha) = I_{t:T}\cdot \bar{C}_{t:T}(\alpha)$ and
$\mathcal{V}_{t:T}(\alpha) = I_{t:T}\cdot \bar{V}_{t:T}(\alpha)$.

\paragraph{GRM: predicting the horizon-aggregate bundle.}
GRM is a causal sequence model that summarizes pre-decision history into a latent state and predicts
the horizon-aggregate response bundle:
\begin{align}
h_t &= f_{\theta}\!\big(s_{1:t}, \alpha_{1:t-1}\big) \in \mathbb{R}^{d},\\
\widehat{\mathcal{R}}_{t:T} &= g_{\theta}(h_t)
~=~
\Big(\, \widehat{I}_{t:T},~ \widehat{\bar{C}}_{t:T}(\cdot),~ \widehat{\bar{V}}_{t:T}(\cdot)\Big).
\end{align}
Here $s_{1:t}$ denotes the sequence of contextual features up to tick $t$,
and $\alpha_{1:t-1}$ is the executed (or logged) multiplier history.
Note that GRM predicts a single aggregate curve for the horizon, not individual curves for each future tick.

\subsection{Function-valued curve parameterization}
GRM outputs a low-dimensional parameter vector that induces the horizon-aggregate response curves.

\paragraph{Design motivation.}
The parametric form should reflect economic structure while remaining compact.
Cost and value curves are bounded, with no impressions won at $\alpha=0$
and all available impressions won as $\alpha\to\infty$.
This motivates a sigmoid shape with saturation parameter $a$.
Bid-landscape models show that win-rate follows a log-concave pattern
in bid price \cite{cui2011bid,wang2016functional}:
the probability of winning increases rapidly at low bids,
then diminishes as the bidder approaches the maximum clearing price.
Applying the sigmoid to $\log(\alpha)$ rather than $\alpha$ directly
captures this diminishing-returns structure.
We use the normal CDF $\Phi$ for numerical stability.
A minimal parameterization (scale, sensitivity, shift) keeps the prediction target low-dimensional
while guaranteeing monotonicity when $b>0$, satisfying assumption (S1) for Theorems~\ref{thm:exact} and \ref{thm:violation}.

\paragraph{Parametric family.}
We parameterize the aggregate cost curve by a monotone saturating family:
\begin{align}
\widehat{\bar{C}}_{t:T}(\alpha)
~=~ a^{(C)} \cdot \tilde{\Phi}\!\Big( b^{(C)},\, c^{(C)},\, \alpha \Big),
\label{eq:curveC}
\end{align}
where $\tilde{\Phi}(b,c,\alpha) := \frac{\Phi(b\log(\alpha+\varepsilon)+c) - \Phi(b\log\varepsilon+c)}{1 - \Phi(b\log\varepsilon+c)}$
is a normalized sigmoid satisfying $\tilde{\Phi}(b,c,0)=0$ and $\lim_{\alpha\to\infty}\tilde{\Phi}(b,c,\alpha)=1$.
This shift-and-rescale enforces the exact boundary conditions $\widehat{\bar{C}}_{t:T}(0)=0$
and $\widehat{\bar{C}}_{t:T}(\infty)=a^{(C)}$, corresponding to no wins at $\alpha=0$ and full saturation as $\alpha\to\infty$.
Here $\Phi(\cdot)$ is the standard normal CDF, and $\varepsilon>0$ is a small constant (we use $\varepsilon=10^{-3}$).
The value curve $\widehat{\bar{V}}_{t:T}(\alpha)$ is parameterized identically with $\theta^{(V)}=(a^{(V)},b^{(V)},c^{(V)})$.
We enforce $a^{(\cdot)}>0$ and $b^{(\cdot)}>0$ via softplus to guarantee valid, monotone curves.

\paragraph{Predicting curve parameters and traffic.}
GRM outputs the traffic forecast together with the aggregate curve parameters:
\begin{align}
(\widehat{I}_{t:T},\,\theta^{(C)},\,\theta^{(V)})
~=~ g_{\theta}(h_t),
\end{align}
where $\theta^{(C)}=(a^{(C)},b^{(C)},c^{(C)})$. This is a compact $(1+6)$-dimensional output per anchor.

\subsection{Training with future-sampling supervision}
\paragraph{Training data and supervision signal.}
Logged data provide, for each tick $k$, the executed multiplier $\alpha_k$ and tick-level aggregates:
the number of bid opportunities $I_k$, total cost $\mathrm{Cost}_k$, and total realized value $\mathrm{Val}_k$.
We form per-opportunity targets at the logged action:
\begin{align}
C_k(\alpha_k) \;\approx\; \frac{\mathrm{Cost}_k}{I_k},
\qquad
V_k(\alpha_k) \;\approx\; \frac{\mathrm{Val}_k}{I_k}.
\end{align}
Since we only observe outcomes at logged multipliers, we train GRM with \emph{future sampling}:
for an anchor tick $t$, sample future ticks $k \sim \mathcal{U}\{t,\dots,T\}$ and apply point supervision.

\paragraph{Weighted curve fitting.}
GRM predicts a single horizon-aggregate curve $\widehat{\bar{C}}_{t:T}(\alpha)$, yet we supervise using
individual tick observations $(k,\alpha_k,C_k(\alpha_k))$ where each tick has a different logged multiplier $\alpha_k$.
We fit the parametric curve to these scattered data points via traffic-weighted regression, so that high-traffic ticks contribute more to the fit, reflecting their larger share in the aggregate.

\paragraph{Training loss.}
For each anchor tick $t$, we draw $M$ future indices
$k_1,\dots,k_M \stackrel{\text{i.i.d.}}{\sim} \mathcal{U}\{t,\dots,T\}$
and weight each sample by the realized traffic $I_{k_m}$:
\begin{align}
\mathcal{L}(\theta)
&= \E_{t}\bigg[
\frac{1}{M}\sum_{m=1}^{M}
\Big( I_{k_m} \big(\widehat{\bar{C}}_{t:T}(\alpha_{k_m}) - C_{k_m}(\alpha_{k_m})\big)^2 \notag\\
&\qquad\qquad\quad + I_{k_m} \big(\widehat{\bar{V}}_{t:T}(\alpha_{k_m}) - V_{k_m}(\alpha_{k_m})\big)^2 \Big) \notag\\
&\qquad + \lambda_I \big(\log \widehat{I}_{t:T} - \log I_{t:T}\big)^2
\bigg].
\label{eq:loss}
\end{align}
The anchor $t$ defines the prediction window $[t,T]$, and the sample indices $k_m$ select future ticks at which the predicted curve is evaluated against logged observations.
$C_{k_m}(\alpha_{k_m})$ is the observed per-opportunity cost at tick $k_m$, and $\lambda_I>0$ balances the traffic and curve losses.
Traffic weighting by $I_{k_m}$ makes the fitted curve approximate the aggregate $\bar{C}_{t:T}(\alpha)$, which is itself a traffic-weighted average of per-tick curves.
The traffic term uses log-scale to stabilize training under heavy-tailed traffic distributions.

\paragraph{Identifiability and coverage.}
Logged data contain only realized $(\alpha_k, C_k(\alpha_k))$ pairs at distinct histories $H_k \ne H_t$, which raises an identifiability concern for the horizon-aggregate curve.
Our goal, however, is not to recover true causal effects of counterfactual $\alpha$,
but to learn a predictive model $\widehat{\bar{C}}_{t:T}(\alpha \given H_t)$
that generalizes to the deployment distribution.
By conditioning on history $H_t$, which encodes remaining budget, cumulative CPA, and elapsed time,
GRM learns state-dependent response predictions rather than unconditional causal curves.
Production auto-bidding systems naturally generate diverse $\alpha$ trajectories
through pacing controller adjustments, heterogeneous advertisers, and A/B tests,
providing coverage without explicit exploration.

\section{Analytic Constrained Control (Min-Pacing) on Generated Responses}\label{sec:control}
At tick $t$, GRM provides a horizon forecast
$\widehat{\mathcal{R}}_{t:T}=(\widehat{I}_{t:T},\,\widehat{\bar{C}}_{t:T}(\cdot),\,\widehat{\bar{V}}_{t:T}(\cdot))$.
We compute horizon-level total cost and value under a constant multiplier $\alpha$:
\begin{align}
\widehat{\mathcal{C}}_{t:T}(\alpha) &:= \widehat{I}_{t:T}\cdot \widehat{\bar{C}}_{t:T}(\alpha), \\
\widehat{\mathcal{V}}_{t:T}(\alpha) &:= \widehat{I}_{t:T}\cdot \widehat{\bar{V}}_{t:T}(\alpha).
\end{align}
\paragraph{State-dependent remaining constraints.}
Let $\mathrm{Cost}_{<t}$ and $\mathrm{Val}_{<t}$ denote realized cumulative cost and value up to tick $t{-}1$.
Given a total budget $B$ and a target CPA $\tau$, the remaining constraints at tick $t$ are
\begin{align}
B_t &:= B - \mathrm{Cost}_{<t}, \\
\Delta_t &:= \tau\cdot \mathrm{Val}_{<t} - \mathrm{Cost}_{<t}.
\end{align}
Note that both $B_t$ and the CPA slack $\Delta_t$ evolve over time as we consume budget and accrue value.

\paragraph{Two 1D solves: budget pacing and CPA pacing.}
We compute two candidate multipliers via 1D root-finding (e.g., bisection).

\textbf{(i) Budget pacing.} Find $\alpha_B$ such that predicted remaining spend matches remaining budget:
\begin{align}
\widehat{\mathcal{C}}_{t:T}(\alpha_B) ~=~ B_t.
\label{eq:alphaB}
\end{align}

\textbf{(ii) CPA pacing.} Enforcing the overall CPA constraint
$\frac{\mathrm{Cost}_{<t}+\widehat{\mathcal{C}}_{t:T}(\alpha)}
{\mathrm{Val}_{<t}+\widehat{\mathcal{V}}_{t:T}(\alpha)} \le \tau$
is equivalent to the scalar inequality
\begin{align}
\widehat{\mathcal{C}}_{t:T}(\alpha) - \tau\,\widehat{\mathcal{V}}_{t:T}(\alpha) \le \Delta_t.
\end{align}
We define $\alpha_C$ as the boundary (equality) solution:
\begin{align}
\widehat{\mathcal{C}}_{t:T}(\alpha_C) - \tau\,\widehat{\mathcal{V}}_{t:T}(\alpha_C) ~=~ \Delta_t.
\label{eq:alphaC}
\end{align}

\paragraph{Min-pacing control law.}
We execute the conservative multiplier
\begin{align}
\alpha_t ~=~ \min\{\alpha_B,\alpha_C\},
\label{eq:minpacing}
\end{align}
which satisfies both constraints whenever the aggregate curves are monotone in $\alpha$ and the root
solutions exist. In \S\ref{sec:theory} we prove that this min-pacing rule is the exact solution
to the single-$\alpha$ constrained optimization (Theorem~\ref{thm:exact}) \cite{balseiro2023field}.

\paragraph{Sufficient condition for CPA monotonicity.}
The root equation \eqref{eq:alphaC} admits a unique solution when
$\Psi_{t:T}(\alpha) := \widehat{\mathcal{C}}_{t:T}(\alpha) - \tau\,\widehat{\mathcal{V}}_{t:T}(\alpha)$
is strictly increasing. By differentiation,
$\Psi'(\alpha) = \widehat{\mathcal{C}}'(\alpha) - \tau\,\widehat{\mathcal{V}}'(\alpha) > 0$
whenever $\widehat{\mathcal{C}}'(\alpha) > \tau\,\widehat{\mathcal{V}}'(\alpha)$, i.e.,
the marginal cost exceeds the target-scaled marginal value.
This holds when $\tau < \widehat{\mathcal{C}}'(\alpha)/\widehat{\mathcal{V}}'(\alpha)$, the marginal CPA at $\alpha$.
In practice, targets are set conservatively. Across the 4{,}032 anchor-tick predictions on P14--P20, $\Psi(\alpha)$ is monotone in approximately $98\%$ of cases.\footnote{We call $\Psi$ monotone on a tick if $\Psi'(\alpha)>0$ holds throughout the operating range $\mathcal A$, with ties at numerical precision treated as monotone.}
When $\Psi$ is non-monotone (e.g., near saturation), we set $\alpha_C := \alpha^{(1)}$, the first root of $\Psi(\alpha) = \Delta_t$.
This is the right endpoint of the feasible interval containing $\underline\alpha$, so the controller avoids overshooting into interior infeasible regions.

\begin{algorithm}[t]
\caption{GRM Training (Future-Sampling Supervision)}
\label{alg:training}
\begin{algorithmic}[1]
\Require Logged sequences $\{(s_k,\alpha_k,I_k,\mathrm{Cost}_k,\mathrm{Val}_k)\}_{k=1}^{T}$
\Require Curve family (e.g., \eqref{eq:curveC}), sampling distribution $\mathcal{S}(t)$, number of samples $M$
\For{each minibatch of sequences}
  \For{each anchor tick $t$ in sampled anchors}
    \State $h_t \gets f_{\theta}(s_{1:t},\alpha_{1:t-1})$ \Comment{encode history}
    \State $(\widehat{I}_{t:T},\,\theta^{(C)},\,\theta^{(V)}) \gets g_{\theta}(h_t)$ \Comment{predict response params}
    \State sample $k_1,\dots,k_M \sim \mathcal{S}(t)$; compute loss $\mathcal{L}(\theta)$ via \eqref{eq:loss}
  \EndFor
  \State update $\theta$ by Adam on accumulated loss
\EndFor
\end{algorithmic}
\end{algorithm}

\begin{algorithm}[t]
\caption{Online Min-Pacing Control (Receding Horizon)}
\label{alg:control}
\begin{algorithmic}[1]
\Require Trained GRM $f_\theta, g_\theta$; budget $B$; target CPA $\tau$; action set $\mathcal{A}$; horizon $T$
\State Initialize $\mathrm{Cost}_0 \gets 0$, $\mathrm{Val}_0 \gets 0$
\For{$t = 1, \dots, T$}
  \State Observe state $s_t$ and history $(s_{1:t}, \alpha_{1:t-1})$
  \State $B_t \gets B - \mathrm{Cost}_{<t}$ \Comment{remaining budget}
  \State $\Delta_t \gets \tau\cdot \mathrm{Val}_{<t} - \mathrm{Cost}_{<t}$ \Comment{CPA slack}
  \State $h_t \gets f_{\theta}(s_{1:t},\alpha_{1:t-1})$ \Comment{encode history}
  \State $(\widehat{I}_{t:T},\,\theta^{(C)},\,\theta^{(V)}) \gets g_{\theta}(h_t)$ \Comment{predict response}
  \State $\widehat{\mathcal{C}}_{t:T}(\alpha) \gets \widehat{I}_{t:T}\cdot\widehat{\bar{C}}_{t:T}(\alpha)$;
  $\widehat{\mathcal{V}}_{t:T}(\alpha) \gets \widehat{I}_{t:T}\cdot\widehat{\bar{V}}_{t:T}(\alpha)$
  \State $\alpha_B \gets \mathrm{BisectionSolve}\big(\widehat{\mathcal{C}}_{t:T}(\alpha)=B_t,\ \alpha\in\mathcal{A}\big)$
  \State $\alpha_C \gets \mathrm{BisectionSolve}\big(\widehat{\mathcal{C}}_{t:T}(\alpha)-\tau\,\widehat{\mathcal{V}}_{t:T}(\alpha)=\Delta_t,\ \alpha\in\mathcal{A}\big)$
  \State $\alpha_t \gets \min\{\alpha_B,\alpha_C\}$ \Comment{min-pacing}
  \State Execute $\alpha_t$; observe realized $\mathrm{Cost}_t$, $\mathrm{Val}_t$
  \State $\mathrm{Cost}_{<t+1} \gets \mathrm{Cost}_{<t} + \mathrm{Cost}_t$; $\mathrm{Val}_{<t+1} \gets \mathrm{Val}_{<t} + \mathrm{Val}_t$
\EndFor
\end{algorithmic}
\end{algorithm}

\section{Theoretical Analysis}\label{sec:theory}
The assumptions (S1)--(S2) and min-pacing structure align with the ``minimally coupled'' dual framework \cite{balseiro2023field},
where budget and ratio constraints decouple via separate pacing multipliers under monotonicity.
Online algorithms for ratio constraints rely on primal-dual updates based on realized slackness \cite{yu2017online,zhang2022leveraging}, whereas GRM predicts the response and solves for feasibility in one shot.
Our violation bounds complement regret-based guarantees by characterizing how prediction error
translates into constraint slack under receding-horizon replanning \cite{aggarwal2024auto}.

We establish three results:
the single-$\alpha$ gap is bounded by efficiency dispersion (\S\ref{sec:single-alpha-gap}),
min-pacing is exact for the single-$\alpha$ problem (\S\ref{sec:exact}),
and constraint violations scale with prediction error (\S\ref{sec:violation}).

\subsection{Single-\texorpdfstring{$\alpha$}{alpha} approximation and structural gap}\label{sec:single-alpha-gap}
GRM predicts one aggregate curve for the entire horizon rather than per-tick curves.
This is a deliberate restriction: the full trajectory problem \eqref{eq:primal} allows a different $\alpha_k$
at each tick, whereas our formulation uses a single $\alpha$ from $t$ to $T$.
Let $\mathrm{OPT}_{\mathrm{trajectory}}$ denote the optimum of the full per-tick problem \eqref{eq:primal}, and $\mathrm{OPT}_{\mathrm{single\text{-}}\alpha}$ its restriction to a single $\alpha$ over $[t,T]$.
We show that the resulting gap is controlled by how much marginal efficiency varies across ticks.

\begin{definition}[Efficiency dispersion]
Let $\alpha^*$ denote the single-$\alpha$ optimum, and let $\tilde\lambda := \bar{V}'(\alpha^*)/\bar{C}'(\alpha^*)$ be the horizon-average marginal efficiency at $\alpha^*$. The efficiency dispersion
\[
\sigma^2 := \frac{1}{I_{t:T}}\sum_{k=t}^{T} I_k \big( V_k'(\alpha^*)/C_k'(\alpha^*) - \tilde\lambda \big)^2
\]
is the traffic-weighted variance of per-tick marginal value-per-cost around $\tilde\lambda$.
\end{definition}

\paragraph{Assumptions.}
(B1)~$C_k,V_k$ are twice differentiable with $C_k'(\alpha)>0$;
(B2)~$h_k(\alpha) := V_k(\alpha) - \tilde\lambda C_k(\alpha)$ is $\gamma$-strongly concave.

\begin{theorem}[Structural gap]\label{thm:gap}
$\mathrm{OPT}_{\mathrm{trajectory}} - \mathrm{OPT}_{\mathrm{single\text{-}}\alpha}
\le \frac{C_{\max}'^{\,2}\cdot I_{t:T}}{2\gamma}\,\sigma^2$,
where $C_{\max}'=\max_k C_k'(\alpha^*)$.
\end{theorem}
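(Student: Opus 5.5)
We argue by Lagrangian relaxation, pricing cost at the multiplier $\tilde\lambda$; this reduces the comparison to a per-tick concave maximization. Throughout we write $\bar{C},\bar{V}$ for $\bar{C}_{t:T},\bar{V}_{t:T}$, and we treat the binding constraint as a budget-type constraint $\sum_k I_k C_k(\alpha_k)\le \mathcal{B}$, with $\mathcal{B}=I_{t:T}\bar{C}(\alpha^*)$ being what the single-$\alpha$ optimum actually spends. The CPA constraint has the same linear form $\sum_k I_k(C_k-\tau V_k)\le\Delta_t$ and is handled identically.

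\textbf{Step 1: Lagrangian upper bound on the trajectory optimum.} For any feasible trajectory $\alpha_{t:T}$ we have $\tilde\lambda\ge 0$ (by (S1)) and $\sum_k I_k C_k(\alpha_k)\le\mathcal{B}$, so
\[
\sum_k I_k V_k(\alpha_k)\le \sum_k I_k h_k(\alpha_k)+\tilde\lambda\,\mathcal{B}\le \sum_k I_k \max_{\alpha} h_k(\alpha)+\tilde\lambda\,\mathcal{B}.
\]
At the single-$\alpha$ point, the spend equals $\mathcal{B}$ exactly. Hence $\mathrm{OPT}_{\mathrm{single}\text{-}\alpha}=\sum_k I_k h_k(\alpha^*)+\tilde\lambda\mathcal{B}$, and therefore
\[
\mathrm{OPT}_{\mathrm{trajectory}}-\mathrm{OPT}_{\mathrm{single}\text{-}\alpha}\le\sum_k I_k\bigl(\max_\alpha h_k(\alpha)-h_k(\alpha^*)\bigr).
\]
The one point to check here is that the constraint is tight at $\alpha^*$, or that we may take $\mathcal{B}$ to be the spend at $\alpha^*$. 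If the constraint were slack at $\alpha^*$, the single-$\alpha$ optimum would sit at an interior stationary point where $\tilde\lambda$ plays no role and the argument is even easier. I will state the tight case explicitly.

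\textbf{Step 2: per-tick suboptimality via strong concavity.} By (B2), $h_k$ is $\gamma$-strongly concave. The standard inequality $\max h_k-h_k(\alpha^*)\le |h_k'(\alpha^*)|^2/(2\gamma)$ then applies (or its analogue on $\mathcal{A}$, which holds because restricting the maximum to an interval only lowers it). Next,
\[
h_k'(\alpha^*)=V_k'(\alpha^*)-\tilde\lambda C_k'(\alpha^*)=C_k'(\alpha^*)\Bigl(\tfrac{V_k'(\alpha^*)}{C_k'(\alpha^*)}-\tilde\lambda\Bigr),
\]
which uses $C_k'>0$ from (B1). Therefore $|h_k'(\alpha^*)|^2\le C_{\max}'^{\,2}\,(V_k'/C_k'-\tilde\lambda)^2$.

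\textbf{Step 3: summation.} Summing with weights $I_k$ gives
\[
\sum_k I_k\frac{|h_k'|^2}{2\gamma}\le \frac{C_{\max}'^{\,2}}{2\gamma}\sum_k I_k\Bigl(\tfrac{V_k'}{C_k'}-\tilde\lambda\Bigr)^2=\frac{C_{\max}'^{\,2} I_{t:T}}{2\gamma}\sigma^2,
\]
which is the claimed bound.

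As a consistency check, the single-$\alpha$ KKT condition $\bar{V}'(\alpha^*)=\tilde\lambda\bar{C}'(\alpha^*)$ is exactly $\sum_k I_k h_k'(\alpha^*)=0$. So $\tilde\lambda$ is the correct Lagrange multiplier, and only the dispersion of the individual $h_k'(\alpha^*)$ about zero contributes to the gap.

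The main obstacle is Step 1. It requires justifying $\tilde\lambda$ as a valid dual price, namely nonnegativity and tightness of the constraint. It also requires handling the CPA constraint, whose Lagrangian is $V-\mu(C-\tau V)$. For that case I would reparametrize so that $h_k$ takes the form $V_k-\tilde\lambda C_k$ after dividing by $1+\mu\tau$, and absorb the resulting constant into the definition of $\tilde\lambda$.
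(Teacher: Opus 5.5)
Your proof is correct and follows the paper's argument: price cost at $\tilde\lambda\ge 0$, reduce the gap to $\sum_k I_k[\max_\alpha h_k(\alpha)-h_k(\alpha^*)]$, bound each term by $h_k'(\alpha^*)^2/(2\gamma)$ using strong concavity, and sum. The one difference is on the single-$\alpha$ side, where you use tightness of the budget at $\alpha^*$ directly, so stationarity is only a consistency check; the paper instead invokes strong duality for $D_S(\tilde\lambda)$.

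One correction to your aside: the CPA case is not handled identically. With the CPA constraint, the Lagrangian integrand is $(1+\mu\tau)h_k$, and that overall factor cannot be absorbed into $\tilde\lambda$. You therefore get the bound only up to a factor $1+\mu^*\tau$, which is exactly the paper's extension and is why the stated theorem is proved for the budget-only problem.
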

\begin{proof}[Proof sketch]
By strong duality, the gap equals
$\sum_k I_k[\max_\alpha h_k(\alpha)-h_k(\alpha^*)]$.
Since $h_k$ is $\gamma$-strongly concave,
each term is at most $(h_k'(\alpha^*))^2/(2\gamma)=(C_k' e_k)^2/(2\gamma)$
where $e_k=V_k'/C_k'-\tilde\lambda$.
Summing with the bound $C_k'\le C_{\max}'$ gives the result.
\end{proof}

\paragraph{Remark on assumption B2.}
Assumption (B2) is local. For the log-sigmoid family of \eqref{eq:curveC}, $h_k$ is strongly concave in a neighborhood of $\alpha^*$, which suffices since $\alpha^*$ is interior to the operating range $\mathcal{A}$ in our experiments.

The gap is quadratic in $\sigma$, so when per-tick marginal efficiency is roughly uniform ($\sigma\approx 0$),
the single-$\alpha$ solution is near-optimal, and the single-curve approximation yields a smooth,
monotone prediction target well-suited to supervised learning.

\subsection{Min-pacing is exact for the single-\texorpdfstring{$\alpha$}{alpha} problem}\label{sec:exact}
\paragraph{Assumptions.}
(A1)~$\mathcal{V}_{t:T}(\alpha)$ is non-decreasing and $\mathcal{C}_{t:T}(\alpha)$ is strictly increasing in $\alpha$;
(A2)~root solutions $\alpha_B,\alpha_C$ exist for the constraint equations when feasible.

\begin{theorem}[Min-pacing exactness]\label{thm:exact}
Under (A1)--(A2), the optimizer of the single-$\alpha$ problem is
$\alpha^*=\min\{\alpha_B,\alpha_C\}$.
\end{theorem}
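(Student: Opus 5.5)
The plan is to write the single-$\alpha$ problem as maximizing $\mathcal{V}_{t:T}(\alpha)$ over $\alpha\in\mathcal{A}$ subject to two constraints:
\[
\mathcal{C}_{t:T}(\alpha)\le B_t \quad\text{and}\quad \Psi(\alpha):=\mathcal{C}_{t:T}(\alpha)-\tau\mathcal{V}_{t:T}(\alpha)\le\Delta_t .
\]
I will show that the feasible set is an interval of the form $[\underline\alpha,\min\{\alpha_B,\alpha_C\}]$. Since the objective is non-decreasing by (A1), its maximum over such an interval is attained at the right endpoint. This reduces the argument to two structural facts, one per constraint, followed by an intersection step.

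\textbf{Budget constraint.} By (A1), $\mathcal{C}_{t:T}$ is strictly increasing, so $\{\alpha:\mathcal{C}_{t:T}(\alpha)\le B_t\}=[\underline\alpha,\alpha_B]$. Here $\alpha_B$ is the unique root from (A2). The boundary conventions are: $\alpha_B=\bar\alpha$ if $\mathcal{C}_{t:T}(\bar\alpha)<B_t$, and the problem is infeasible if $\mathcal{C}_{t:T}(\underline\alpha)>B_t$. I will state these conventions explicitly so that (A2) covers them.

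\textbf{CPA constraint (main obstacle).} The difficulty is that $\Psi$ is a difference of two non-decreasing functions, so (A1) alone does not make it monotone. My first approach is to use the paper's definition of $\alpha_C$ as the first root of $\Psi=\Delta_t$, i.e. the right endpoint of the feasible component containing $\underline\alpha$. I will read (A2) as saying that this root exists and that $\Psi$ is increasing, which is the sufficient condition for CPA monotonicity discussed in \S\ref{sec:control}. Under that reading, $\{\Psi\le\Delta_t\}=[\underline\alpha,\alpha_C]$, exactly as in the budget case.

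If I cannot assume monotonicity of $\Psi$, I would instead restrict the claim to the connected feasible component containing $\underline\alpha$. That is the set the controller actually operates on, and I would note the restriction as a remark.

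\textbf{Conclusion.} The feasible set is the intersection $[\underline\alpha,\alpha_B]\cap[\underline\alpha,\alpha_C]=[\underline\alpha,\min\{\alpha_B,\alpha_C\}]$. Since $\mathcal{V}_{t:T}$ is non-decreasing, $\alpha^*=\min\{\alpha_B,\alpha_C\}$ is optimal.

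The objective is only weakly increasing, so the optimizer need not be unique. To make the statement exact, I will show that $\min\{\alpha_B,\alpha_C\}$ attains the maximum value and is the largest maximizer. I will also check the equivalence of the ratio form and the linear form of the CPA constraint. That equivalence requires the denominator $\mathrm{Val}_{<t}+\mathcal{V}_{t:T}(\alpha)$ to be positive, which I will assume or handle as a trivial case.
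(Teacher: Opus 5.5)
Your proposal is correct and follows the paper's route. A non-decreasing $\mathcal{V}_{t:T}$ pushes the optimum to the largest feasible $\alpha$, strict monotonicity of $\mathcal{C}_{t:T}$ makes the budget-feasible set $[\underline\alpha,\alpha_B]$, the CPA-feasible set is cut off at $\alpha_C$, and intersecting gives the right endpoint $\min\{\alpha_B,\alpha_C\}$. You are in fact more careful than the paper: its proof only asserts $\mathcal{F}_C\subseteq[\underline\alpha,\alpha_C]$, which silently needs $\Psi$ to stay above $\Delta_t$ beyond $\alpha_C$ (the monotonicity you make explicit), and it never checks that $\min\{\alpha_B,\alpha_C\}$ is itself feasible, which your equality of feasible sets supplies.
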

\begin{proof}
Since $\mathcal{V}_{t:T}$ is non-decreasing, maximizing value requires the largest feasible $\alpha$.
The budget-feasible set is $\mathcal{F}_B=[\underline\alpha,\alpha_B]$ (by strict monotonicity of $\mathcal{C}_{t:T}$),
and the CPA-feasible set satisfies $\mathcal{F}_C\subseteq[\underline\alpha,\alpha_C]$.
Thus $\mathcal{F}_B\cap\mathcal{F}_C\subseteq[\underline\alpha,\min\{\alpha_B,\alpha_C\}]$,
and the optimum is attained at the upper bound $\alpha^*=\min\{\alpha_B,\alpha_C\}$.
\end{proof}

\paragraph{Unbinding budget.}
When predicted maximum spend cannot exhaust the remaining budget, that is, $B_t > a^{(C)}\widehat I_{t:T}$, the root $\alpha_B$ is undefined, and we set $\alpha_B := \bar\alpha$ so that the controller reduces to $\alpha_t = \alpha_C$, or to $\bar\alpha$ when the CPA constraint is also slack.

\subsection{Prediction error and constraint violation}\label{sec:violation}
GRM predicts $(\widehat{I}_{t:T},\widehat{\bar{C}}_{t:T},\widehat{\bar{V}}_{t:T})$ at every tick $t$.
Define horizon-uniform per-opportunity curve and traffic errors
\begin{align*}
\epsilon_C &:= \sup_{t,\,\alpha}|\widehat{\bar{C}}_{t:T}(\alpha)-\bar{C}_{t:T}(\alpha)|, \\
\epsilon_V &:= \sup_{t,\,\alpha}|\widehat{\bar{V}}_{t:T}(\alpha)-\bar{V}_{t:T}(\alpha)|, \\
\epsilon_I &:= \sup_t |\widehat{I}_{t:T}-I_{t:T}|.
\end{align*}
Then the total-cost curve error satisfies
\begin{align}
\epsilon_t := \sup_\alpha|\widehat{\mathcal{C}}_{t:T}(\alpha)-\mathcal{C}_{t:T}(\alpha)|
\le I_{t:T}\epsilon_C + \epsilon_I\,\bar{C}_{\max} + \epsilon_I\epsilon_C,
\end{align}
where $\bar{C}_{\max}=\sup_\alpha \bar{C}_{t:T}(\alpha)$.
For CPA, define $\bar{\Psi}_{t:T}(\alpha)=\bar{C}_{t:T}(\alpha)-\tau\,\bar{V}_{t:T}(\alpha)$ and
$\Psi_t(\alpha)=C_t(\alpha)-\tau\,V_t(\alpha)$, and let $\Delta$ denote the initial CPA slack
(typically $0$ if there is no prior spend).

\paragraph{Assumptions.}
(C1)~$0<\underline{C}'\le \bar{C}'_{t:T}(\alpha)\le \bar{L}_C$ and $C_t'(\alpha)\le L_C$;
(C2)~$0<\underline{\Psi}'\le \bar{\Psi}'_{t:T}(\alpha)\le \bar{L}_\Psi$ and $|\Psi_t'(\alpha)|\le L_\Psi$.

\begin{theorem}[Constraint violation bounds]\label{thm:violation}
Under (A1)--(A2) and (C1)--(C2), the receding-horizon min-pacing policy satisfies
\begin{align}
\textstyle\sum_t I_t C_t(\alpha_t)
&\le B + \rho\big(I_{1:T}\epsilon_C + \epsilon_I\,\bar{C}_{\max} H_I \notag \\
&\qquad\quad + \epsilon_I\epsilon_C H_I\big), \\
\textstyle\sum_t I_t \Psi_t(\alpha_t)
&\le \Delta + \rho_\Psi\big(I_{1:T}(\epsilon_C+\tau\epsilon_V) + \epsilon_I\,\bar{\Psi}_{\max} H_I \notag \\
&\qquad\quad + \epsilon_I(\epsilon_C+\tau\epsilon_V) H_I\big),
\end{align}
where $\rho=L_C/\underline{C}'$, $\rho_\Psi=L_\Psi/\underline{\Psi}'$, and
$\bar{\Psi}_{\max}=\sup_\alpha|\bar{\Psi}_{t:T}(\alpha)|$.
The factor $H_I := \sum_{t=1}^{T} I_t/I_{t:T}$ depends on the traffic profile. Under uniform traffic $I_t \equiv I$ it equals the $T$-th harmonic number $H_T$, bounded by $1 + \ln T$.
\end{theorem}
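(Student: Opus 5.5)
The plan is to carry out the budget bound in full and then get the CPA bound by the same argument with $\Psi$ in place of $C$ and $\Delta_t$ in place of $B_t$. The mechanism is a one-step drift argument. At each tick the controller plans for the remaining horizon using the predicted total curve. Because the true aggregate curve has slope at least $\underline{C}'$, a prediction error $\epsilon_t$ in total cost displaces the planned multiplier from the true-curve root by at most $\epsilon_t/(I_{t:T}\underline{C}')$. The resulting excess spend in the current tick is then at most $L_C$ times this displacement times $I_t$.

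\textbf{Step 1: per-tick overshoot.} Let $\alpha_B^\circ$ be the root of the true equation $\mathcal{C}_{t:T}(\alpha)=B_t$. The executed multiplier satisfies $\alpha_t\le\alpha_B$ by min-pacing, where $\alpha_B$ is the root of $\widehat{\mathcal{C}}_{t:T}(\alpha)=B_t$. Using (C1),
\[
I_{t:T}\underline{C}'(\alpha_B-\alpha_B^\circ)\le \mathcal{C}_{t:T}(\alpha_B)-B_t\le \epsilon_t,
\]
so $\alpha_B-\alpha_B^\circ\le \epsilon_t/(I_{t:T}\underline{C}')$. If $\alpha_B$ is clipped to $\bar\alpha$ because the budget is unbinding, this bound only improves. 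Then, since $C_t'\le L_C$,
\[
I_tC_t(\alpha_t)\le I_tC_t(\alpha_B^\circ)+\rho\,\epsilon_t\,\frac{I_t}{I_{t:T}}.
\]

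\textbf{Step 2: telescoping the ideal plan.} I need to show that the "ideal" spends $I_tC_t(\alpha_B^\circ)$ keep the cumulative spend within $B$ plus the accumulated overshoots. The cleanest route is a Lyapunov-style claim that the remaining budget never goes below the negative of the accumulated overshoot, proved by induction on $t$. The ideal plan at $t$ would exhaust $B_t$ exactly in expectation if $\alpha_B^\circ$ were held constant, so the current tick's ideal spend is the share $I_tC_t(\alpha_B^\circ)$ of a plan totaling $B_t$.

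At the final tick $T$, $I_{T:T}=I_T$, so the ideal spend equals $B_T$ and the final total spend is at most $B$ plus the last overshoot. Working backward via the recursion $B_{t+1}=B_t-I_tC_t(\alpha_t)$, each earlier overshoot carries forward additively. This gives
\[
\textstyle\sum_tI_tC_t(\alpha_t)\le B+\rho\sum_t\epsilon_t\,I_t/I_{t:T}.
\]
This is the main obstacle. The receding-horizon replanning must genuinely absorb earlier deviations rather than compound them, and this relies on the induction showing that undershoot or overshoot at tick $t$ shifts $B_{t+1}$ and is replanned from there. I would try the induction on the quantity $B_t-\sum_{k\ge t}(\text{ideal plan})$. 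Where needed I would allow the mild interpretation that $\mathcal{C}_{t:T}$ uses the same expected per-tick curves, as in the paper's setup.

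\textbf{Step 3: plugging in error bounds.} Insert the displayed bound
\[
\epsilon_t\le I_{t:T}\epsilon_C+\epsilon_I\bar C_{\max}+\epsilon_I\epsilon_C.
\]
The first term gives $\sum_tI_t\epsilon_C=I_{1:T}\epsilon_C$. The remaining terms give $(\epsilon_I\bar C_{\max}+\epsilon_I\epsilon_C)\sum_tI_t/I_{t:T}=(\cdots)H_I$. For uniform traffic, $I_t/I_{t:T}=1/(T-t+1)$, so $H_I=H_T\le1+\ln T$.

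\textbf{Step 4: the CPA bound.} Repeat Steps 1–3 with $\Psi$ in place of $C$, $\Delta_t$ in place of $B_t$, and (C2) in place of (C1). The slack recursion is $\Delta_{t+1}=\Delta_t-I_t\Psi_t(\alpha_t)$, which telescopes to $\Delta-\sum_tI_t\Psi_t(\alpha_t)$. The predicted-versus-true error of $\widehat{\mathcal{C}}-\tau\widehat{\mathcal{V}}$ is bounded by
\[
I_{t:T}(\epsilon_C+\tau\epsilon_V)+\epsilon_I\bar\Psi_{\max}+\epsilon_I(\epsilon_C+\tau\epsilon_V),
\]
via the same product decomposition $\hat I\hat{\bar\Psi}-I\bar\Psi=I(\hat{\bar\Psi}-\bar\Psi)+(\hat I-I)\bar\Psi+(\hat I-I)(\hat{\bar\Psi}-\bar\Psi)$. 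Since $|\Psi_t'|\le L_\Psi$ may involve a negative slope, I use the absolute-value Lipschitz bound in Step 1, which yields the factor $\rho_\Psi$.
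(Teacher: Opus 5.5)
Your budget argument follows the paper's route. Your Steps 1 and 3 are the paper's Steps 1--3: root displacement via $\underline{C}'$, transfer to the current tick via $L_C$, and summation producing $H_I$. Your final-tick sentence is the paper's Step 4 evaluated at $t=T$.

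The ``main obstacle'' you flag is not one, and no induction is needed. Because $B_T=B-\mathrm{Cost}_{<T}$ is the \emph{realized} remaining budget, every earlier deviation is already inside $B_T$. Your Step 1 at $t=T$, where $I_T/I_{T:T}=1$ and $I_TC_T(\alpha_B^\circ)=\mathcal{C}_{T:T}(\alpha_B^\circ)=B_T$, gives directly
\[
\textstyle\sum_t I_tC_t(\alpha_t)=(B-B_T)+I_TC_T(\alpha_T)\le (B-B_T)+B_T+\rho\,\epsilon_T=B+\rho\,\epsilon_T .
\]
Here $\rho\,\epsilon_T$ is just the $t=T$ summand of $\rho\sum_t\epsilon_t I_t/I_{t:T}$, so the stated bound follows in sharper form.

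The paper writes the same inequality at every $t$. It uses your own ``share of a plan totaling $B_t$'' observation, i.e.\ nonnegativity $I_tC_t(\alpha_B^\circ)\le\mathcal{C}_{t:T}(\alpha_B^\circ)=B_t$, to get $\mathrm{Cost}_{\le t}\le B+|\delta_t|$, and only then relaxes $|\delta_T|$ to $\sum_t|\delta_t|$. Earlier overshoots do not ``carry forward additively''; replanning from the realized $B_{t+1}$ is exactly what absorbs them. The quantity you proposed to induct on, $B_t-\mathcal{C}_{t:T}(\alpha_B^\circ)$, is identically zero by the definition of $\alpha_B^\circ$. Like the paper, this silently requires the true root to exist at $T$ (in particular $B_T\ge 0$), which is how (A2) is being used.

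For CPA, ``repeat Steps 1--3'' has a soft spot that the absolute Lipschitz bound does not fix. Step 1 passes from $\alpha_t\le\alpha_C$ to $\Psi_t(\alpha_t)\le\Psi_t(\alpha_C)$, so it needs the \emph{per-tick} curve $\Psi_t$ to be increasing. But (C2) only bounds the slope of the aggregate $\bar\Psi_{t:T}$ from below, and (S1) says nothing about $C_t-\tau V_t$. When the budget binds, $\alpha_t=\alpha_B<\alpha_C$, and $|\alpha_t-\alpha_C^\circ|$ is not controlled by any prediction error. So $|\Psi_t'|\le L_\Psi$ yields no per-tick bound at intermediate ticks.

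You do not need them. At $t=T$ one has $\bar\Psi_{T:T}=\Psi_T$, so (C2) makes $\Psi_T$ strictly increasing, and $I_T\Psi_T(\alpha_C^\circ)=\Delta_T$ holds with \emph{equality}. The final-tick argument then goes through verbatim: $\sum_t I_t\Psi_t(\alpha_t)=(\Delta-\Delta_T)+I_T\Psi_T(\alpha_T)\le\Delta+\rho_\Psi\epsilon^{\Psi}_T$. Here $\epsilon^{\Psi}_T$ is your bound $I_T(\epsilon_C+\tau\epsilon_V)+\epsilon_I\bar\Psi_{\max}+\epsilon_I(\epsilon_C+\tau\epsilon_V)$, so the result is at most the stated right-hand side.

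This is tighter and more complete than the paper's CPA treatment. The paper notes that its nonnegativity step fails for CPA because $\Psi_s$ can be negative, and then simply asserts the deviation-summation bound. Using exact equality at the last tick avoids the sign issue altogether.
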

\begin{proof}[Proof sketch]
At each tick $t$, both GRM and the oracle observe the same realized remaining constraints.
The multiplier error is bounded by the mean-value theorem and the slope lower bounds,
and only one tick is executed before replanning, yielding a per-tick deviation scaled by $I_t/I_{t:T}$.
Summing over $t$ produces the $H_I\approx\log T$ factor for traffic error, reflecting receding-horizon self-correction.
\end{proof}

Curve error $\epsilon_C$ contributes a systematic $O(I_{1:T}\epsilon_C)$ term, while traffic error
$\epsilon_I$ contributes only $O(\epsilon_I\log T)$ thanks to self-correction.
Full proofs are deferred to the appendix.

\section{Experiments}
\subsection{Setup}
\paragraph{Environment and data.}
We use the AuctionNet simulation environment \cite{su2024a} from the NeurIPS 2024 Auto-Bidding Challenge.
The dataset spans multiple delivery periods (e.g., P7--P27), each with 48 ticks and 0.5M+ bid opportunities.
At each tick, 48 bidding agents compete in the auction.
The logs include predicted conversion values, bid prices, auction logs, and impression outcomes (500M+ records).
All models are trained on periods P7--P13.
For overall performance (\S\ref{sec:overall}) and prediction quality (\S\ref{sec:prediction}) experiments,
we evaluate on P14--P20 with the bids of the other 47 advertisers fixed, applying the policy only to the target advertiser.
For distribution shift experiments (\S\ref{sec:shift}), competing agents generate bids dynamically
following the simulation protocol of the AuctionNet competition.

\paragraph{Baselines.}
We compare against representative offline RL and generative bidding baselines.
All methods output a \emph{tick-level multiplier} $\alpha_t \in \mathcal{A}$,
ensuring a fair comparison in the same policy class.
For methods originally designed for per-impression bidding (BC, CQL, IQL),
we aggregate to tick-level by averaging predicted bids and dividing by
the mean predicted value.
Behavioral Cloning (BC) \cite{torabi2018behavioral} imitates logged multipliers directly.
Conservative Q-Learning (CQL) \cite{kumar2020conservative} and
Implicit Q-Learning (IQL) \cite{kostrikov2021offline} learn value functions with
offline regularization.
Decision Transformer (DT) \cite{chen2021decision} conditions on return-to-go, for which we set RTG to the best achievable value under constraint satisfaction.
DiffBid \cite{guo2024generative} and EBaReT \cite{li2025ebaret} are
recent generative approaches with conditional diffusion and
expert-guided reward transformers, respectively.
For distribution shift experiments, we include FTRL,
a control-based pacing method representative of industrial practice.

\paragraph{Metrics.}
We use the official AuctionNet score:
$\text{score} = p(\text{cpa}; d) \cdot \sum_t \mathrm{Val}_t$,
where $\mathrm{Val}_t = I_t \cdot V_t(\alpha_t)$ is the total realized value
at tick $t$ (traffic $\times$ per-opportunity value), and $p(\text{cpa}; d) = \min\{(d/\text{cpa})^\beta, 1\}$
penalizes CPA constraint violations with $\beta=2$.
When realized CPA exceeds the target $d$, the penalty reduces the cumulative value proportionally.
For distribution shift experiments, we report score degradation (percentage drop from normal to shifted conditions).
We use the same budget targets, CPA targets, and input features across all methods,
reporting mean$\pm$std across evaluation periods.

\paragraph{Implementation details.}
GRM uses a 2-layer causal Transformer encoder (4 heads, 128 hidden dimension)
with a 2-layer MLP decoder outputting 7 parameters (traffic and curve coefficients).
We train with AdamW (lr=$10^{-3}$, weight decay $10^{-5}$) and batch size 64.
For future sampling, we draw $M=8$ future ticks per anchor.
Full architecture and hyperparameters are in Appendix~\ref{app:implementation}.

\subsection{Overall performance}\label{sec:overall}
Table~\ref{tab:main} summarizes the main comparison against baselines on AuctionNet.
GRM achieves the highest average score of 33.88, outperforming the best baseline EBaReT (31.43) by 7.8\%.
GRM shows strong performance across all periods, with particularly large gains on P19 (38.88 vs.\ 34.66).
The standard deviation of GRM (2.40) is comparable to other methods, indicating stable performance despite the margin.

GRM-short, a single-tick variant that predicts only the current tick's response, achieves 31.14 score, 8.1\% lower than full GRM.
Like bid-landscape models, GRM-short cannot anticipate future traffic patterns, competitive shifts, or budget and CPA evolution.
Horizon-aggregate prediction matters because snapshot-based approaches cannot pace budgets optimally over the full campaign.

\begin{table*}[t]
  \centering
  \caption{Overall performance on AuctionNet (P14--P20). Best in \textbf{bold}, second best \underline{underlined}.
  $^\dagger$GRM-short: single-tick variant without long-horizon dynamics (analogous to bid-landscape).}
  \label{tab:main}
  \begin{tabular}{lcccccccc}
    \toprule
    Method & P14 & P15 & P16 & P17 & P18 & P19 & P20 & Avg.\ $\pm$ Std. \\
    \midrule
    BC & 28.53 & 27.37 & 30.55 & 28.64 & 29.13 & 31.70 & 26.43 & 28.91 $\pm$ 1.79 \\
    CQL & 31.09 & 29.99 & 29.88 & 29.73 & 27.97 & 33.45 & 27.99 & 30.01 $\pm$ 1.89 \\
    DT & 30.01 & 29.79 & 30.31 & 31.92 & 29.52 & 34.42 & 29.70 & 30.81 $\pm$ 1.78 \\
    IQL & \underline{32.75} & 25.43 & 27.05 & \underline{35.09} & 30.38 & 34.07 & 28.39 & 30.45 $\pm$ 3.67 \\
    DiffBid & 31.95 & 28.62 & 30.65 & \textbf{35.59} & 27.72 & 34.36 & 29.79 & 31.24 $\pm$ 2.91 \\
    EBaReT & 30.44 & \underline{30.53} & \textbf{32.62} & 31.75 & \underline{31.18} & \underline{34.66} & 28.80 & \underline{31.43 $\pm$ 1.86} \\
    \midrule
    GRM-short$^\dagger$ & 31.23 & 29.88 & 30.95 & 32.47 & 29.73 & 33.58 & \underline{30.12} & 31.14 $\pm$ 1.47 \\
    GRM (ours) & \textbf{33.11} & \textbf{32.60} & \underline{32.48} & 34.57 & \textbf{31.69} & \textbf{38.88} & \textbf{33.84} & \textbf{33.88 $\pm$ 2.40} \\
    \bottomrule
  \end{tabular}
\end{table*}

\subsection{Robustness to distribution shift}\label{sec:shift}
We evaluate robustness under two distribution shift scenarios that challenge different aspects of auto-bidding:
(1) \emph{competition surge}, where competing agents' budgets increase by 1.1$\times$, and
(2) \emph{CPA tightening}, where the target CPA decreases to 0.8$\times$ of the original.
These shifts are applied to entire episodes (not mid-episode), simulating realistic environment changes.

We compare GRM against FTRL (a control-based method) and DT (a generative baseline).
Figure~\ref{fig:shift} shows the results.
Under competition surge, GRM maintains 93\% of normal performance (7.2\% degradation),
while FTRL drops by 9.0\% and DT drops by 22.6\%.
Despite having the highest baseline performance (33.51 vs.\ FTRL 29.83, DT 31.24),
GRM exhibits the smallest performance drop in absolute terms.
By re-predicting response curves at each tick, GRM captures the changed competitive landscape and adjusts $\alpha_B$ accordingly.
FTRL's reactive control adapts reasonably well to moderate shifts,
but DT suffers severely since it relies on offline data that does not reflect the shifted distribution.

Under CPA tightening, GRM shows 5.0\% degradation,
compared to FTRL's 6.9\% and DT's 13.9\%.
GRM immediately updates the CPA slack $\Delta_t$ and re-solves for $\alpha_C$ at each tick,
enabling precise constraint tracking under tighter targets.
Constraint violation rates show the same pattern.
Under shifted conditions, GRM's violation rate increases from 6.5\% to only 9.8\%,
while FTRL's increases from 8.2\% to 15.3\% and DT's from 10.8\% to 28.7\%.

\begin{figure}[t]
  \centering
  \includegraphics[width=\linewidth]{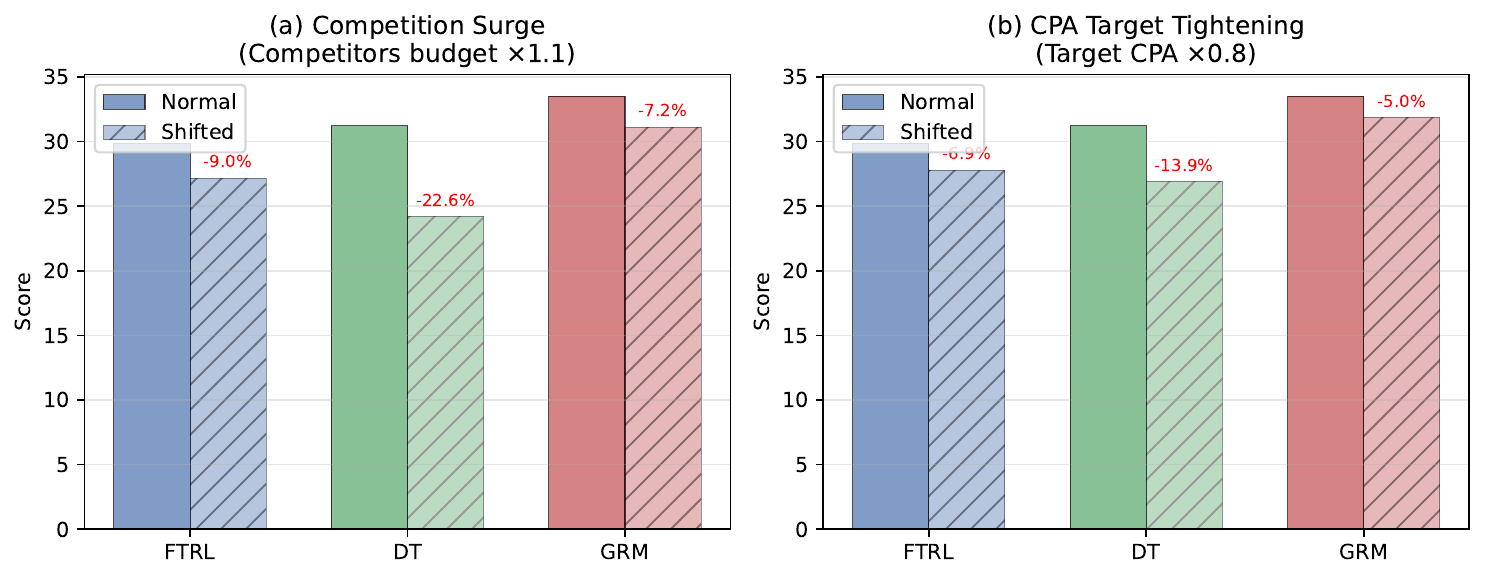}
  \caption{Robustness under distribution shift.
    (a) Competition surge: competing agents' budgets increase 1.1$\times$.
    (b) CPA tightening: target CPA decreases to 0.8$\times$.
    Bars show score under normal (solid) and shifted (hatched) conditions.
    GRM degrades by 7.2\% and 5.0\% respectively, compared to DT's 22.6\% and 13.9\%.}
  \label{fig:shift}
\end{figure}

\subsection{Prediction quality and performance}\label{sec:prediction}
GRM ties prediction quality directly to bidding performance
(Theorem~\ref{thm:violation}).
We validate this by analyzing the relationship between validation loss and test performance
across training checkpoints.

\paragraph{Training checkpoints as prediction quality proxies.}
Rather than measuring prediction error on counterfactual outcomes
(which requires extensive multi-run simulation), we use validation loss
as a direct proxy for prediction quality.
Our GRM training objective minimizes
$\mathcal{L} = \mathcal{L}_{\text{traffic}} + \mathcal{L}_{\text{cost}} + \mathcal{L}_{\text{value}}$
on a held-out validation set, providing a natural measure of how accurately
the model predicts future response curves.

We evaluate 10 checkpoints from training runs with varying convergence quality
(some stopped early, others trained longer with different hyperparameters).
Figure~\ref{fig:error} shows a negative correlation
($r = -0.78$, $p < 0.01$) between validation loss and test score,
with lower-loss checkpoints achieving higher performance up to realistic scatter from training variance.
The best checkpoint (loss=0.96) achieves 33.88 score,
while poorly-converged checkpoints (loss$>$1.04) average below 30.0 score.
Repeating the analysis across 18 architecture and hyperparameter configurations yields a consistent correlation ($r = -0.72$).
This relationship validates that improved prediction quality,
as measured by validation loss, translates to better bidding decisions
and higher cumulative value, consistent with Theorem~\ref{thm:violation}'s
graceful degradation guarantee.
Per-phase prediction quality across the horizon is reported in Appendix~\ref{app:phase}.

\begin{figure}[t]
  \centering
  \includegraphics[width=0.7\linewidth]{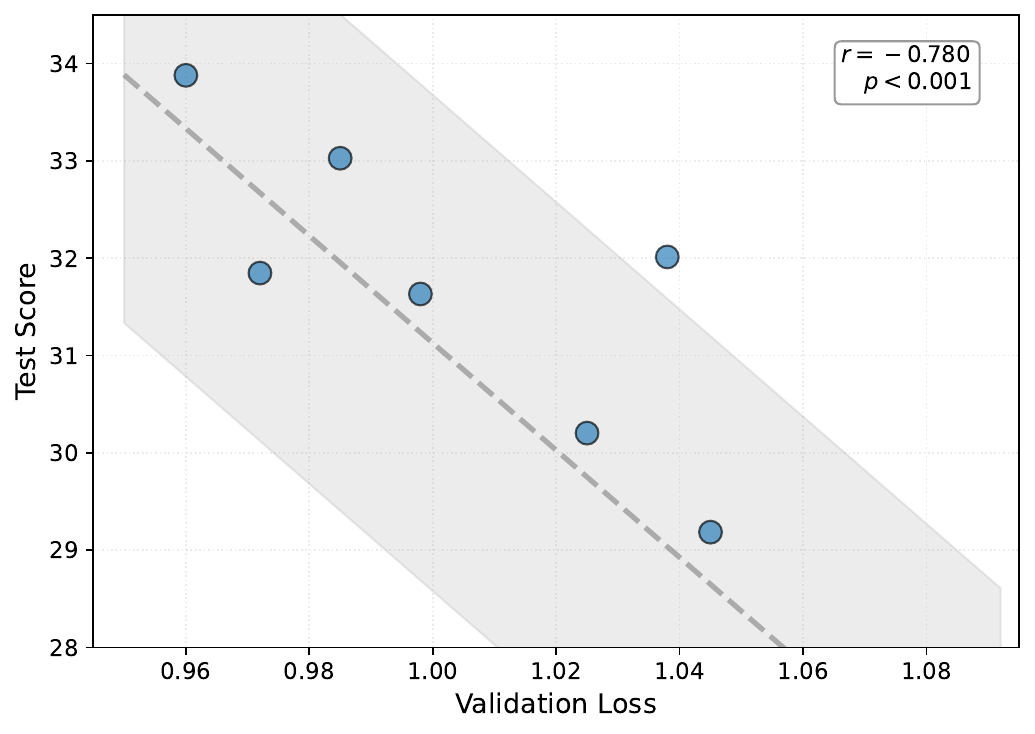}
  \caption{Validation loss vs performance across training checkpoints.
    Negative correlation ($r=-0.78$, $p<0.01$) validates that
    better response prediction (lower validation loss) improves
    bidding performance. Points represent 10 checkpoints with varying
    training quality (some stopped early, others converged well).
    The best checkpoint (validation loss=0.96) achieves the highest
    score (33.88). Gray shaded region shows confidence band.
    This relationship demonstrates GRM's graceful degradation property
    predicted by Theorem~\ref{thm:violation}: prediction quality
    directly impacts bidding performance.}
  \label{fig:error}
\end{figure}

\subsection{Curve family ablation}\label{sec:curve-ablation}
Table~\ref{tab:curve} compares curve parametrizations.
Log-sigmoid achieves the best performance (33.88),
outperforming linear ($-10.9\%$), piecewise-linear ($-6.8\%$), sigmoid ($-4.1\%$), and a 2-layer monotone MLP ($-1.1\%$).
Linear curves fail to capture saturation at high $\alpha$, leading to budget overspend.
Sigmoid curves (without log transform) underestimate diminishing returns in the mid-$\alpha$ range.
The log transform is critical: bid-landscape models show win-rate follows a log-concave pattern \cite{cui2011bid},
which log-sigmoid naturally captures.
The monotone MLP (32 hidden units, softplus weights) matches log-sigmoid in shape but not in score, indicating that the low-parametric form is sufficient for the smooth horizon-aggregate target and that added capacity tends to fit noise.

\begin{table}[t]
\centering
\caption{Curve family ablation on AuctionNet.}
\label{tab:curve}
\begin{tabular}{lcc}
\toprule
Curve Family & Avg Score & $\Delta$ \\
\midrule
Linear & 30.18 & $-10.9\%$ \\
Piecewise-linear & 31.57 & $-6.8\%$ \\
Sigmoid & 32.49 & $-4.1\%$ \\
Monotone MLP & 33.52 & $-1.1\%$ \\
Log-sigmoid (ours) & \textbf{33.88} & -- \\
\bottomrule
\end{tabular}
\end{table}

\section{Related Work}
\paragraph{Real-time bidding and auto-bidding.}
Real-time bidding (RTB) enables per-impression auctions that complete within 100ms,
where demand-side platforms submit bids on behalf of advertisers in either second-price or first-price auctions
\cite{yuan2013real,edelman2007internet}.
Auto-bidding systems automate these bid decisions for advertisers who specify high-level objectives
(e.g., maximize conversions) and constraints (e.g., budget, target CPA or ROAS) \cite{aggarwal2024auto}.
The scale and latency constraints of RTB (millions of auctions per second) combined with
long-horizon constraint satisfaction impose conflicting requirements.
Decisions must be fast and stateless at the impression level, yet they must also coordinate across the campaign horizon
to balance exploitation with budget and efficiency targets under non-stationary competition \cite{auto_bidding_survey}.
Auto-bidding has become the dominant paradigm in major ad platforms \cite{balseiro2023field,he2021unified,su2024spending}.

\paragraph{Control and pacing.}
Adaptive pacing and control-based bidding update multipliers to track spend and ROI targets
\cite{wu2018budget,he2021unified,zhang2016feedback}.
PID-based controllers and Lagrangian dual methods provide stable real-time adjustments
but often rely on limited horizon forecasting and are reactive to deviations.
Early work on budget-constrained bidding established theoretical foundations for pacing equilibria
\cite{conitzer2022pacing,balseiro2015repeated} and practical implementations \cite{wu2018budget,chen2011real}.
In practice, budget pacing and efficiency pacing (ROAS/CPA) are often implemented as separate services,
and minimally-coupled coordination, where each constraint is handled by its own multiplier
and the final bid takes the minimum, has emerged as a strong operational baseline \cite{balseiro2023field}.
Our controller implements min-pacing on top of learned response curves,
inheriting this coordination structure while gaining predictive capability.
Recent work addresses sustainability under non-stationarity \cite{mou2022sustainable}
and multi-agent competition \cite{wen2022cooperative},
yet these methods typically do not predict full response curves for direct constraint solving.

\paragraph{Reinforcement learning.}
Offline RL methods address auto-bidding through conservative value estimation
\cite{kumar2020conservative,fujimoto2019off} or implicit regularization \cite{kostrikov2021offline}.
These approaches learn bidding policies from logged data,
but constraints are typically encoded via reward shaping (making violations difficult to diagnose),
and distribution shift can cause unpredictable degradation \cite{kiyohara2021accelerating}.
Online RL \cite{cai2017real,zhao2018deep} adapts to non-stationarity
but live exploration risks budget waste and constraint violations unacceptable in production.
Model-based approaches \cite{chen2023model,li2024trajectory} offer sample efficiency but require accurate environment models.
GRM sidesteps policy learning by predicting environment responses instead,
which separates learning from constraint enforcement and allows generalization without reward shaping or unsafe exploration \cite{zhao2019deep}.

\paragraph{Generative models.}
Decision Transformer \cite{chen2021decision} and its constrained variant \cite{liu2023constrained}
recast RL as sequence modeling, conditioning on desired returns.
DiffBid \cite{guo2024generative} applies conditional diffusion to auto-bidding.
DT-based extensions include GAS \cite{li2025gas} with post-training search,
GAVE \cite{gao2025generative} with value-guided exploration,
and EBaReT \cite{li2025ebaret} with expert-guided reward transformers.
These methods share a common paradigm: \emph{scalar-conditioned action generation},
where constraints are encoded through return conditioning, learned critics, or inference-time search.
Multiple constraints are conflated into a single signal, which makes it hard to identify the binding constraint.
Without a feasibility-solving step, the model also relies on alignment between the conditioning value and the environment response, an alignment that can fail under distribution shift \cite{ada2024diffusion,prudencio2023survey}.
GRM instead predicts \emph{environment responses} and explicitly solves for constraint-feasible multipliers,
making constraint status transparent and mapping prediction error to violation bounds (Theorem~\ref{thm:violation}).

\paragraph{Bid-landscape modeling.}
Landscape models estimate win-rate or clearing-price distributions as a function of bid \cite{cui2011bid,ghosh2019scalable,wang2016functional}.
These provide per-impression insights but predict per-impression distributions rather than horizon-aggregate curves,
requiring separate traffic forecasting.
They are also snapshot-based and do not condition on execution history, making them brittle under non-stationarity.

\paragraph{Forecasting + control hybrids.}
A line of work combines traffic or value forecasting with downstream bidding optimization \cite{zhao2018deep,auto_bidding_survey}.
These approaches predict \emph{point forecasts} (e.g., expected traffic volume)
and rely on separate optimization layers to translate forecasts into bids.
GRM instead predicts \emph{function-valued} outputs: the entire cost and value response curves as functions of $\alpha$.
Constraint enforcement is not a separate layer but an analytic solve on predicted curves,
tightly integrating prediction and control \cite{balseiro2023field}.

\section{Conclusion}
We introduced the Generative Response Model (GRM), which shifts the learning target in auto-bidding from actions to environment responses.
GRM predicts horizon-aggregate cost and value curves as functions of a single bid multiplier,
enabling a lightweight analytic controller to enforce budget and ratio constraints via two 1D root solves.
Because the controller solves directly on predicted curves, constraint handling is explicit and any violation can be traced back to a specific prediction error, a transparency unavailable in end-to-end methods.
Violations scale with prediction error, and the single-$\alpha$ gap is bounded by efficiency dispersion.
On AuctionNet, GRM outperforms the strongest baseline by $7.8\%$ and degrades less under distribution shift.

\balance
\bibliographystyle{ACM-Reference-Format}
\bibliography{ref}


\appendix

\section{Full Proofs}\label{app:proofs}

\subsection{Proof of Theorem~\ref{thm:gap} (Structural Gap)}\label{app:proof-gap}

We bound the budget-only gap (the joint case is treated in the extension below) between the trajectory problem $\mathrm{OPT}_{\mathrm{trajectory}} := \max_{\alpha_t,\ldots,\alpha_T} \sum_{k=t}^T I_k V_k(\alpha_k)$ subject to $\sum_{k=t}^T I_k C_k(\alpha_k) \le B_t$ and its single-$\alpha$ restriction $\mathrm{OPT}_{\mathrm{single}\text{-}\alpha}$. Recall (B1)--(B2) and the dispersion $\sigma^2$ from \S\ref{sec:single-alpha-gap}.

\begin{proof}
\textit{Step 1.}
The Lagrangian duals
\begin{align*}
D_T(\lambda) &= \sum_{k=t}^T I_k \max_\alpha [V_k(\alpha) - \lambda C_k(\alpha)] + \lambda B_t, \\
D_S(\lambda) &= \max_\alpha \sum_{k=t}^T I_k [V_k(\alpha) - \lambda C_k(\alpha)] + \lambda B_t
\end{align*}
satisfy $D_T(\lambda) \ge D_S(\lambda)$ for $\lambda \ge 0$ since the trajectory form allows per-tick optimization.

\textit{Step 2.}
The single-$\alpha$ KKT condition at the (budget-active) optimum $\alpha^*$ gives the dual variable $\lambda_S^\star = \sum_k I_k V_k'(\alpha^*) / \sum_k I_k C_k'(\alpha^*) = \bar V'(\alpha^*)/\bar C'(\alpha^*) = \tilde\lambda$, so $\mathrm{OPT}_{\mathrm{single}\text{-}\alpha} = D_S(\tilde\lambda)$ by strong duality. Combined with weak duality $\mathrm{OPT}_{\mathrm{trajectory}} \le D_T(\tilde\lambda)$:
\[
\mathrm{OPT}_{\mathrm{traj}} - \mathrm{OPT}_{\mathrm{single}\text{-}\alpha}
\le D_T(\tilde\lambda) - D_S(\tilde\lambda)
= \sum_{k=t}^T I_k\big[\max_\alpha h_k(\alpha) - h_k(\alpha^*)\big],
\]
where $h_k(\alpha) := V_k(\alpha) - \tilde\lambda C_k(\alpha)$.

\textit{Step 3.}
Let $e_k := V_k'(\alpha^*)/C_k'(\alpha^*) - \tilde\lambda$, so $h_k'(\alpha^*) = V_k'(\alpha^*) - \tilde\lambda C_k'(\alpha^*) = C_k'(\alpha^*)\, e_k$. By $\gamma$-strong concavity of $h_k$,
\[
\max_\alpha h_k(\alpha) - h_k(\alpha^*) \;\le\; \frac{(h_k'(\alpha^*))^2}{2\gamma} \;=\; \frac{(C_k'(\alpha^*))^2 e_k^2}{2\gamma}.
\]
Summing with $\max_k C_k'(\alpha^*) \le C_{\max}'$:
\[
\mathrm{gap} \;\le\; \frac{C_{\max}'^{\,2}}{2\gamma}\sum_{k=t}^T I_k e_k^2 \;=\; \frac{C_{\max}'^{\,2} \cdot I_{t:T}}{2\gamma}\,\sigma^2.
\]
\end{proof}

\paragraph{Extension to budget + CPA}
For the joint Lagrangian $D(\lambda,\mu) = \sum_{k=t}^T I_k \max_\alpha[V_k(\alpha) - \lambda C_k(\alpha) - \mu(C_k(\alpha) - \tau V_k(\alpha))] + \lambda B_t + \mu \Delta_t$, the per-tick integrand rearranges as $V_k - \lambda C_k - \mu(C_k - \tau V_k) = (1+\mu\tau)[V_k - \tilde\lambda C_k] = (1+\mu\tau)\, h_k$ with effective dual $\tilde\lambda := (\lambda^* + \mu^*)/(1+\mu^*\tau)$. Steps 1--3 proceed identically on $h_k$ with the scaling carried through, giving $\mathrm{OPT}_{\mathrm{trajectory}} - \mathrm{OPT}_{\mathrm{single}\text{-}\alpha} \le (1+\mu^*\tau)\, C_{\max}'^{\,2}\, I_{t:T}\, \sigma^2/(2\gamma)$, with $\sigma^2$ defined relative to $\tilde\lambda$.

\subsection{Proof of Theorem~\ref{thm:violation} (Constraint Violation Bounds)}\label{app:proof-violation}

GRM predicts $(\widehat{I}_{t:T}, \widehat{\bar{C}}_{t:T}, \widehat{\bar{V}}_{t:T})$ at each tick $t$ and the controller computes $\hat{\alpha}_t = \min\{\hat{\alpha}_B, \hat{\alpha}_C\}$ on predicted curves under (A1)--(A2) and (C1)--(C2). With horizon-uniform bounds $\epsilon_C := \sup_{t,\alpha}|\widehat{\bar{C}}_{t:T}(\alpha) - \bar{C}_{t:T}(\alpha)|$, $\epsilon_V := \sup_{t,\alpha}|\widehat{\bar{V}}_{t:T}(\alpha) - \bar{V}_{t:T}(\alpha)|$, $\epsilon_I := \sup_t |\widehat{I}_{t:T} - I_{t:T}|$, writing $\widehat{\mathcal{C}}_{t:T} - \mathcal{C}_{t:T} = \widehat{I}_{t:T}(\widehat{\bar{C}}_{t:T} - \bar{C}_{t:T}) + (\widehat{I}_{t:T} - I_{t:T})\bar{C}_{t:T}$ and using $\widehat{I}_{t:T} \le I_{t:T} + \epsilon_I$,
\[
\epsilon_t := \sup_\alpha |\widehat{\mathcal{C}}_{t:T}(\alpha) - \mathcal{C}_{t:T}(\alpha)| \le I_{t:T}\epsilon_C + \epsilon_I \bar{C}_{\max} + \epsilon_I\epsilon_C,
\]
with $\bar{C}_{\max} := \sup_\alpha \bar{C}_{t:T}(\alpha)$.

\begin{proof}
We prove the budget bound first and return to the CPA case at the end. At each tick $t$, the controller and oracle share the same remaining budget $B_t = B - \mathrm{Cost}_{<t}$, so past overspending makes the controller more conservative.

\textit{Step 1.}
The controller solves $\widehat{\mathcal{C}}_{t:T}(\hat{\alpha}_B) = B_t$ while an oracle would solve $\mathcal{C}_{t:T}(\alpha_B^*) = B_t$. Then $\mathcal{C}_{t:T}(\hat{\alpha}_B) - B_t = \mathcal{C}_{t:T}(\hat{\alpha}_B) - \widehat{\mathcal{C}}_{t:T}(\hat{\alpha}_B) \le \epsilon_t$, and the mean value theorem gives $|\hat{\alpha}_B - \alpha_B^*| \le \epsilon_t/(I_{t:T}\underline{C}')$.

\textit{Step 2.}
The per-tick cost deviation $|I_t C_t(\hat{\alpha}_B) - I_t C_t(\alpha_B^*)| \le I_t L_C |\hat{\alpha}_B - \alpha_B^*| \le \rho (I_t/I_{t:T})\,\epsilon_t$, where $\rho := L_C/\underline{C}'$.

\textit{Step 3.}
Substituting and summing,
\[
\sum_{t=1}^T |\delta_t| \le \rho \sum_{t=1}^T \tfrac{I_t}{I_{t:T}}(I_{t:T}\epsilon_C + \epsilon_I \bar{C}_{\max} + \epsilon_I\epsilon_C) = \rho(I_{1:T}\epsilon_C + \epsilon_I \bar{C}_{\max} H_I + \epsilon_I\epsilon_C H_I),
\]
where $H_I := \sum_{t=1}^T I_t/I_{t:T}$ (the $T$-th harmonic number $H_T \le 1 + \ln T$ under uniform traffic).

\textit{Step 4 (telescoping).}
At each tick $t$, monotonicity of $C_t$ with $\hat{\alpha}_t \le \hat{\alpha}_B$ gives $I_t C_t(\hat{\alpha}_t) \le I_t C_t(\hat{\alpha}_B)$, which Step 2 in turn bounds by $I_t C_t(\alpha_B^*) + |\delta_t|$. Non-negativity of costs further gives $I_t C_t(\alpha_B^*) \le \mathcal{C}_{t:T}(\alpha_B^*) = B_t$, and substituting $B_t = B - \mathrm{Cost}_{<t}$ yields the telescope $\sum_{s\le t} I_s C_s(\hat{\alpha}_s) \le B + |\delta_t|$ for each $t$. At $t=T$, combining $|\delta_T| \le \sum_t |\delta_t|$ with Step 3 gives the stated budget bound. The sharper $|\delta_T|$-only inequality reflects receding-horizon self-correction, but we report the summed form for symmetry with the CPA case.

For CPA, applying Steps 1--3 to $\Psi_t := C_t - \tau V_t$ with $\rho_\Psi = L_\Psi/\underline{\Psi}'$ yields the analogous deviation sum $\sum_t |\delta_t^\Psi|$ matching the stated CPA factor. Since $\Psi_s$ may take either sign, Step 4's non-negativity argument does not apply directly, so the stated CPA inequality is the conservative deviation-summation bound.
\end{proof}

\section{Implementation Details}\label{app:implementation}
\subsection{GRM Architecture and Hyperparameters}

Table~\ref{tab:hyperparams} summarizes the key hyperparameters for GRM.
The sequence encoder is a causal Transformer that processes up to 48 ticks of history.
The curve decoder is a 2-layer MLP outputting 7 parameters: 
log traffic $\log \widehat{I}_{t:T}$ and curve parameters $(a^{(C)}, b^{(C)}, c^{(C)}, a^{(V)}, b^{(V)}, c^{(V)})$.
We apply softplus to $a^{(\cdot)}, b^{(\cdot)}$ to ensure positivity and monotonicity.

\begin{table}[!htbp]
\centering
\caption{GRM hyperparameters.}
\label{tab:hyperparams}
\begin{tabular}{ll}
\toprule
\textbf{Component} & \textbf{Setting} \\
\midrule
\multicolumn{2}{l}{\textit{Architecture}} \\
Transformer layers / heads & 2 / 4 \\
Hidden / FFN dimension & 128 / 512 \\
Context length & 48 ticks \\
Curve decoder & 2-layer MLP (64 hidden) \\
Total parameters & $\sim$850K \\
\midrule
\multicolumn{2}{l}{\textit{Training}} \\
Optimizer & AdamW (lr=$10^{-3}$, wd=$10^{-5}$) \\
Batch size & 64 \\
Future samples per anchor ($M$) & 8 \\
Traffic loss weight ($\lambda_I$) & 0.1 \\
\midrule
\multicolumn{2}{l}{\textit{Controller}} \\
Action range $\mathcal{A}$ & $[0.01, 300.0]$ \\
Bisection tolerance & $10^{-6}$ (relative) \\
\bottomrule
\end{tabular}
\end{table}

\subsection{Per-horizon-phase prediction quality}\label{app:phase}
Table~\ref{tab:phase} breaks down GRM's prediction error by horizon phase on P14--P20.
Curve MSE improves monotonically as the horizon shortens, because by the late phase the aggregate target is determined by only a few remaining ticks.
Traffic MAPE rises slightly toward the end because the remaining-traffic denominator shrinks.
Theorem~\ref{thm:violation} predicts this trade-off is absorbed by receding-horizon replanning: the traffic-error contribution to constraint violation scales as $O(\epsilon_I \ln T)$, while the curve-error contribution dominates earlier in the horizon.

\begin{table}[!htbp]
\centering
\caption{Prediction quality across horizon phases.}
\label{tab:phase}
\begin{tabular}{lcc}
\toprule
Horizon phase & Curve MSE (cost+value) & Traffic MAPE \\
\midrule
Early ($0$ to $T/3$) & 1.37 & 2.7\% \\
Mid ($T/3$ to $2T/3$) & 1.07 & 4.0\% \\
Late ($2T/3$ to $T$) & 0.37 & 7.4\% \\
\bottomrule
\end{tabular}
\end{table}

\subsection{Sensitivity to traffic-loss weight \texorpdfstring{$\lambda_I$}{lambda\_I}}\label{app:lambda}

\begin{table}[H]
\centering
\caption{Sensitivity to traffic-loss weight $\lambda_I$.}
\label{tab:lambda}
\begin{tabular}{cc}
\toprule
$\lambda_I$ & Avg score \\
\midrule
$0.05$ & 32.47 \\
$0.1$ (default) & \textbf{33.88} \\
$0.5$ & 33.05 \\
\bottomrule
\end{tabular}
\end{table}

Table~\ref{tab:lambda} reports score across three settings of $\lambda_I$.
Performance is stable around the default $\lambda_I = 0.1$. Under-weighting traffic ($\lambda_I = 0.05$) hurts more than over-weighting ($\lambda_I = 0.5$), consistent with the role of the traffic term in pinning the absolute scale of $\widehat{\mathcal C}_{t:T}$.

\end{document}